\newtheorem{theorem}{Theorem}
\newtheorem{proposition}{Proposition}
\DeclareMathOperator*{\argmin}{arg\,min}
\definecolor{green}{rgb}{0.1,0.5,0.1}
\definecolor{red}{rgb}{1.0,0.2,0.2}
\newcommand{\add}[1]{{\color{green} #1}}
\title{Style Transfer from Non-Parallel Text by Cross-Alignment}
\author{
	Tianxiao Shen$^1$~~~~~Tao Lei$^2$~~~~~Regina Barzilay$^1$~~~~~Tommi Jaakkola$^1$\\
    $^1$MIT CSAIL~~~~~~~~~~$^2$ASAPP Inc.\\
    $^1$\texttt{\{tianxiao, regina, tommi\}@csail.mit.edu}~~~~$^2$\texttt{tao@asapp.com}
}
\begin{document}

\maketitle

\begin{abstract}
This paper focuses on style transfer on the basis of non-parallel text. This is an instance of a broad family of problems including machine translation, decipherment, and sentiment modification.
The key challenge is to separate the content from other aspects such as style.
We assume a shared latent content distribution across different text corpora, and propose a method that leverages refined alignment of latent representations to perform style transfer. The transferred sentences from one style should match example sentences from the other style as a population.
We demonstrate the effectiveness of this cross-alignment method on three tasks: sentiment modification, decipherment of word substitution ciphers, and recovery of word order.\footnote{Our code and data are available at \url{https://github.com/shentianxiao/language-style-transfer}.}
\end{abstract}

\section{Introduction}
Using massive amounts of parallel data has been essential for recent advances in text generation tasks, such as machine translation and summarization.
However, in many text generation problems, we can only assume access to non-parallel or mono-lingual data. Problems such as decipherment or style transfer are all instances of this family of tasks. In all of these problems, we must preserve the content of the source sentence but render the sentence consistent with desired presentation constraints (e.g., style, plaintext/ciphertext). 


The goal of controlling one aspect of a sentence such as style independently of its content requires that we can disentangle the two. However, these aspects interact in subtle ways in natural language sentences, and we can succeed in this task only approximately even in the case of parallel data. Our task is more challenging here. We merely assume access to two corpora of sentences with the same distribution of content albeit rendered in different styles. Our goal is to demonstrate that this distributional equivalence of content, if exploited carefully, suffices for us to learn to map a sentence in one style to a style-independent content vector and then decode it to a sentence with the same content but a different style. 


In this paper, we introduce a refined alignment of sentence representations across text corpora. We learn an encoder that takes a sentence and its original style indicator as input, and maps it to a style-independent content representation. This is then passed to a  style-dependent decoder for rendering. We do not use typical VAEs for this mapping since it is imperative to keep the latent content representation rich and unperturbed. Indeed, richer latent content representations are much harder to align across the corpora and therefore they offer more informative content constraints. Moreover, we reap additional information from cross-generated (style-transferred) sentences, thereby getting two distributional alignment constraints. For example, positive sentences that are style-transferred into negative sentences should match, as a population, the given set of negative sentences. We illustrate this cross-alignment in Figure~\ref{fig:overview}.

\begin{figure}
\centering
\includegraphics[width=0.7\textwidth]{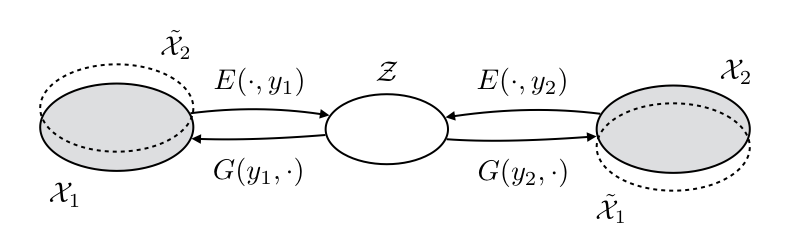}
\caption{
An overview of the proposed cross-alignment method. $\mathcal X_1$ and $\mathcal X_2$ are two sentence domains with different styles $y_1$ and $y_2$, and $\mathcal Z$ is the shared latent content space. Encoder $E$ maps a sentence to its content representation, and generator $G$ generates the sentence back when combining with the original style. When combining with a different style, transferred $\tilde{\mathcal X_1}$ is aligned with $\mathcal X_2$ and $\tilde{\mathcal X_2}$ is aligned with $\mathcal X_1$ at the distributional level.
}
\label{fig:overview}
\end{figure}

To demonstrate the flexibility of the proposed model, we evaluate it on three tasks: sentiment modification, decipherment of word substitution ciphers, and recovery of word order. In all of these applications, the model is trained on non-parallel data. On the sentiment modification task, the model successfully transfers the sentiment while keeps the content for 41.5\% of review sentences according to human evaluation,  compared to 41.0\% achieved by the control-gen model of~\cite{hu2017controllable}. It achieves strong performance on the decipherment and word order recovery tasks, reaching Bleu score of 57.4 and 26.1 respectively, obtaining 50.2 and 20.9 gap than a comparable method without cross-alignment.

\section{Related work}

\paragraph{Style transfer in vision} Non-parallel style transfer has been extensively studied in computer vision~\citep{gatys2016image,zhu2017unpaired,liu2016coupled,liu2017unsupervised,taigman2016unsupervised,kim2017learning,yi2017dualgan}. \citet{gatys2016image} explicitly extract content and style features, and then synthesize a new image by combining ``content'' features of one image with ``style'' features from another.
More recent approaches learn generative networks directly via generative adversarial training~\citep{goodfellow2014generative} from two given data domains $\bm X_1$ and $\bm X_2$. The key computational challenge in this non-parallel setting is aligning the two domains. For example, CoupledGANs~\citep{liu2016coupled} employ weight-sharing between networks to learn cross-domain representation, whereas CycleGAN~\citep{zhu2017unpaired} introduces cycle consistency which relies on transitivity to regularize the transfer functions.  While our approach has a similar high-level architecture, the  discreteness of natural language does not allow us to reuse these models and necessitates the development of new methods.

\paragraph{Non-parallel transfer in natural language} In natural language processing, most tasks that involve generation (e.g., translation and 
summarization) are trained using parallel sentences. Our work most closely relates to approaches that do not utilize parallel data, but instead guide sentence generation from an indirect training signal~\citep{muller2017sequence,hu2017controllable}. For instance,~\citet{muller2017sequence} manipulate the hidden representation to generate sentences that satisfy a desired property (e.g., sentiment) as measured by a corresponding classifier. However, their model does not necessarily enforce content preservation.
More similar to our work,
\citet{hu2017controllable} aims at generating sentences with controllable attributes by learning disentangled latent representations~\citep{chen2016infoganir}.
Their model builds on variational auto-encoders (VAEs) and uses independency constraints to enforce that attributes can be reliably inferred back from generated sentences. While our model builds on distributional cross-alignment for the purpose of style transfer and content preservation, these constraints can be  added in the same way.


\paragraph{Adversarial training over discrete samples} Recently, a wide range of techniques addresses challenges associated with adversarial training over discrete samples generated by recurrent networks \citep{yu2016seqgan,lamb2016professor,hjelm2017boundary,che2017maximum}. In our work, we employ the Professor-Forcing algorithm \citep{lamb2016professor} which was originally proposed to close the gap between teacher-forcing during training and self-feeding during testing for recurrent networks. This design fits well with our scenario of style transfer that calls for cross-alignment.
By using continuous relaxation to approximate the discrete sampling process~\citep{jang2016categorical, maddison2016concrete}, the  training procedure can be effectively optimized through back-propagation~\citep{kusner2016gans,goyal2017differentiable}.

\section{Formulation}
In this section, we formalize the task of non-parallel style transfer and discuss the feasibility of the learning problem.
We assume the data are generated by the following process:
\begin{enumerate}
\item a latent style variable $\bm y$ is generated from some distribution $p(\bm y)$;
\item a latent content variable $\bm z$ is generated from some distribution $p(\bm z)$;
\item a datapoint $\bm x$ is generated from conditional distribution $p(\bm x|\bm y,\bm z)$. 
\end{enumerate}

We observe two datasets with the same content distribution but different styles $\bm y_1$ and $\bm y_2$, where $\bm y_1$ and $\bm y_2$ are unknown. Specifically, the two observed datasets $\bm X_1=\{\bm x_1^{(1)}, \cdots, \bm x_1^{(n)}\}$ and $\bm X_2=\{\bm x_2^{(1)}, \cdots, \bm x_2^{(m)}\}$ consist of samples drawn from $p(\bm x_1|\bm y_1)$ and $p(\bm x_2|\bm y_2)$ respectively. 
We want to estimate the style transfer functions between them, namely $p(\bm x_1|\bm x_2;\bm y_1,\bm y_2)$ and $p(\bm x_2|\bm x_1;\bm y_1,\bm y_2)$. 

A question we must address is when this estimation problem is feasible.
Essentially, we only observe the marginal distributions of $\bm x_1$ and $\bm x_2$, yet we are going to recover their joint distribution:
\begin{align}
p(\bm x_1,\bm x_2|\bm y_1,\bm y_2)=\int_{\bm z}p(\bm z)p(\bm x_1|\bm y_1,\bm z)p(\bm x_2|\bm y_2,\bm z)d\bm z
\end{align}
As we only observe $p(\bm x_1|\bm y_1)$ and $p(\bm x_2|\bm y_2)$, $\bm y_1$ and $\bm y_2$ are unknown to us. If two different $\bm y$ and $\bm y'$ lead to the same distribution $p(\bm x|\bm y)=p(\bm x|\bm y')$, then given a dataset $\bm X$ sampled from it, its underlying style can be either $\bm y$ or $\bm y'$. Consider the following two cases: (1) both datasets $\bm X_1$ and $\bm X_2$ are sampled from the same style $\bm y$; (2) $\bm X_1$ and $\bm X_2$ are sampled from style $\bm y$ and $\bm y'$ respectively. These two scenarios have different joint distributions, but the observed marginal distributions are the same. To prevent such confusion, we constrain the underlying distributions as stated in the following proposition:

\begin{proposition}\label{prop:recover}
In the generative framework above, $\bm x_1$ and $\bm x_2$'s joint distribution can be recovered from their marginals only if for any different $\bm y,\bm y'\in \mathcal Y$, distributions $p(\bm x|\bm y)$ and $p(\bm x|\bm y')$ are different.
\end{proposition}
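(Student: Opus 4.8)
Since the proposition is a necessity (``only if'') claim, the plan is to establish the contrapositive: I will show that whenever there exist distinct $\bm y,\bm y'\in\mathcal Y$ with $p(\bm x\mid\bm y)=p(\bm x\mid\bm y')$, the joint $p(\bm x_1,\bm x_2\mid\bm y_1,\bm y_2)$ is not determined by the observed marginals. The mechanism is to exhibit two instances of the generative framework that produce identical pairs of marginals but distinct joints, so that no rule mapping marginals to the joint can be correct on every instance.

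First I would fix a content prior $p(\bm z)$ and a conditional family $\{p(\bm x\mid\bm y,\bm z)\}_{\bm y\in\mathcal Y}$, together with a witnessing pair $\bm y\ne\bm y'$ for which $P:=p(\bm x\mid\bm y)=p(\bm x\mid\bm y')$, where $p(\bm x\mid\bm y)=\int_{\bm z}p(\bm z)\,p(\bm x\mid\bm y,\bm z)\,d\bm z$. Then I would compare instance (i) with underlying styles $(\bm y_1,\bm y_2)=(\bm y,\bm y)$ against instance (ii) with $(\bm y_1,\bm y_2)=(\bm y,\bm y')$. Because the two corpora share the content distribution, in each instance the first corpus has marginal $P$ and the second has marginal $p(\bm x\mid\bm y)$ resp.\ $p(\bm x\mid\bm y')$, which are both $P$; so the observed pair of marginals is $(P,P)$ in both cases. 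The corresponding joints are
\begin{align*}
J_{\mathrm{(i)}}(\bm x_1,\bm x_2)&=\int_{\bm z}p(\bm z)\,p(\bm x_1\mid\bm y,\bm z)\,p(\bm x_2\mid\bm y,\bm z)\,d\bm z,\\
J_{\mathrm{(ii)}}(\bm x_1,\bm x_2)&=\int_{\bm z}p(\bm z)\,p(\bm x_1\mid\bm y,\bm z)\,p(\bm x_2\mid\bm y',\bm z)\,d\bm z,
\end{align*}
and the last thing to verify is that these two can genuinely be different.

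To make that concrete I would supply a minimal explicit model: take $\bm z$ uniform on $\{0,1\}$ and $\bm x\in\{0,1\}$, with style $\bm y$ setting $\bm x=\bm z$ deterministically and style $\bm y'$ setting $\bm x=1-\bm z$. Here $p(\bm x\mid\bm y)=p(\bm x\mid\bm y')$ is uniform on $\{0,1\}$, so the hypothesis of the contrapositive holds, yet $J_{\mathrm{(i)}}$ is uniform on $\{(0,0),(1,1)\}$ whereas $J_{\mathrm{(ii)}}$ is uniform on $\{(0,1),(1,0)\}$ --- identical marginals, different joints. This is exactly the confusion described just before the proposition (the same dataset may have been generated with matching or with mismatched underlying styles), now put in a form that certifies non-identifiability, which completes the contrapositive.

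The main obstacle --- and essentially the only substantive point --- is this distinctness of $J_{\mathrm{(i)}}$ and $J_{\mathrm{(ii)}}$: the argument needs style to be able to act on $\bm x$ \emph{conditionally on $\bm z$} even when that effect vanishes after marginalizing out $\bm z$. If no such action were possible, $\bm y$ and $\bm y'$ would induce the same $p(\bm x\mid\bm y,\bm z)$ for every $\bm z$, the premise $\bm y\ne\bm y'$ would carry no modeling content, and the setting would be degenerate; the explicit model above (or any nontrivial style action) rules this out. A minor point I would flag is the reading of ``recovered from their marginals'': since a single instance's joint is never a function of its own marginals, I interpret the phrase --- consistently with the motivating discussion --- as asserting the existence of one reconstruction rule valid across the whole framework, which the two-instance construction refutes.
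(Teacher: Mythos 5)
Your proof is correct and follows essentially the same route as the paper, which argues non-identifiability by contrasting the instance where both corpora share style $\bm y$ with the instance where they have styles $\bm y$ and $\bm y'$, yielding identical marginals but different joints. Your version is in fact more careful than the paper's informal argument: you supply an explicit model certifying that the two joints genuinely differ, and you correctly flag the degenerate edge case where $p(\bm x\mid\bm y,\bm z)=p(\bm x\mid\bm y',\bm z)$ for all $\bm z$, which the paper's assertion silently skips over.
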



This proposition basically says that $\bm X$ generated from different styles should be ``distinct'' enough, otherwise the transfer task between styles is not well defined. 
While this seems trivial, it may not hold even for simplified data distributions. The following examples illustrate how the transfer (and recovery) becomes feasible or infeasible under different model assumptions. As we shall see, for a certain family of styles $\mathcal Y$, the more complex distribution for $\bm z$, the more probable it is to recover the transfer function and the easier it is to search for the transfer.

\subsection{Example 1: Gaussian}
Consider the common choice that $\bm z\sim \mathcal N(\bm 0, \bm I)$ has a centered isotropic Gaussian distribution. Suppose a style $\bm y=(\bm A,\bm b)$ is an affine transformation, i.e. $\bm x=\bm A\bm z+\bm b+\bm \epsilon$, where $\bm\epsilon$ is a noise variable. For $\bm b=\bm 0$ and any orthogonal matrix $\bm A$, $\bm A\bm z+\bm b\sim N(\bm 0, \bm I)$ and hence $\bm x$ has the same distribution for any such styles $\bm y=(\bm A, \bm 0)$. In this case, the effect of rotation cannot be recovered. 

Interestingly, if $\bm z$ has \emph{\bf a more complex distribution}, such as a Gaussian mixture, then affine transformations can be uniquely determined.

\begin{restatable}{lemma}{gaussian}\label{lemma:gaussian}
Let $\bm z$ be a mixture of Gaussians $p(\bm z)=\sum_{k=1}^K \pi_k\mathcal N(\bm z;\bm\mu_k, \bm \Sigma_k)$. Assume $K\ge 2$, and there are two different $\bm\Sigma_i\ne\bm\Sigma_j$. Let $\mathcal Y=\{(\bm A,\bm b)||\bm A|\ne 0\}$ be all invertible affine transformations, and $p(\bm x|\bm y,\bm z)=\mathcal N(\bm x;\bm A\bm z+\bm b,\epsilon^2\bm I)$, in which $\epsilon$ is a noise. Then for all $\bm y\ne\bm y'\in\mathcal Y$, $p(\bm x|\bm y)$ and $p(\bm x|\bm y')$ are different distributions.
\end{restatable}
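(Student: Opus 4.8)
The plan is to reduce the statement to the classical identifiability of finite Gaussian mixtures. First I would write $p(\bm x\mid\bm y)$ in closed form: marginalizing $\bm z$ against $p(\bm x\mid\bm y,\bm z)=\mathcal N(\bm x;\bm A\bm z+\bm b,\epsilon^2\bm I)$ and using that an affine image of a Gaussian is Gaussian gives
\[
p(\bm x\mid\bm y)=\sum_{k=1}^{K}\pi_k\,\mathcal N\!\bigl(\bm x;\ \bm A\bm\mu_k+\bm b,\ \bm A\bm\Sigma_k\bm A^{\top}+\epsilon^2\bm I\bigr),
\]
so $p(\bm x\mid\bm y)$ is again a $K$-component Gaussian mixture with the same weights but transformed component parameters, and likewise for $\bm y'=(\bm A',\bm b')$. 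Since $\bm A$ and $\bm A'$ are invertible, distinct pairs $(\bm\mu_k,\bm\Sigma_k)$ remain distinct after transformation, so the effective number of components is preserved on both sides. I would then argue by contraposition: assume $p(\bm x\mid\bm y)=p(\bm x\mid\bm y')$ and aim to conclude $\bm y=\bm y'$.

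Second, I would invoke the identifiability theorem for finite mixtures of Gaussians (Teicher; Yakowitz--Spragins): equality of the two mixtures forces their (weight, mean, covariance) triples to agree as multisets, so there is a permutation $\sigma$ of the components with matched weights and, for every $k$,
\[
\bm A\bm\Sigma_k\bm A^{\top}+\epsilon^2\bm I=\bm A'\bm\Sigma_{\sigma(k)}(\bm A')^{\top}+\epsilon^2\bm I,\qquad \bm A\bm\mu_k+\bm b=\bm A'\bm\mu_{\sigma(k)}+\bm b'.
\]
Cancelling $\epsilon^2\bm I$ and setting $\bm M=(\bm A')^{-1}\bm A$ turns the first family of relations into $\bm M\bm\Sigma_k\bm M^{\top}=\bm\Sigma_{\sigma(k)}$, i.e.\ $\bm M$ realizes $\sigma$ as a congruence of the covariance multiset, while the mean relations say the affine map $\bm z\mapsto\bm M\bm z+(\bm A')^{-1}(\bm b-\bm b')$ permutes the $\bm\mu_k$ compatibly with $\sigma$. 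The target is to force $\bm M=\bm I$ and then $\bm b=\bm b'$, which gives $\bm y=\bm y'$.

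The step I expect to be the real obstacle is exactly ruling out a nontrivial $\sigma$ (equivalently, a nontrivial $\bm M$). Even when $\sigma$ is the identity, one only gets that a single matrix $\bm M$ fixes every $\bm\Sigma_k$ by congruence, which does not by itself imply $\bm M=\bm I$; the argument must combine this with the mean relations and with the hypothesis $\bm\Sigma_i\neq\bm\Sigma_j$ at two indices. Moreover, the conclusion as literally stated is sensitive to symmetries of $p(\bm z)$: whenever $p(\bm z)$ admits an invertible affine self-map permuting its components -- for instance the coordinate swap for $\tfrac12\mathcal N(\bm 0,\mathrm{diag}(1,2))+\tfrac12\mathcal N(\bm 0,\mathrm{diag}(2,1))$ -- composing any $\bm y$ with that symmetry produces a genuinely different $\bm y'$ with $p(\bm x\mid\bm y)=p(\bm x\mid\bm y')$, so a clean proof needs either a nondegeneracy assumption on $\{(\pi_k,\bm\mu_k,\bm\Sigma_k)\}$ or the conclusion should be read modulo affine symmetries of $p(\bm z)$. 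Note finally that the tempting shortcut of matching only the first two moments, namely $\bm A\bm C\bm A^{\top}=\bm A'\bm C(\bm A')^{\top}$ with $\bm C=\mathrm{Cov}(\bm z)$, is far too weak on its own, which is why the mixture-identifiability route together with careful bookkeeping of which means and covariances are tied to which is where the work concentrates.
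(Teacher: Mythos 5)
Your route is the same as the paper's: write $p(\bm x\mid\bm y)$ as the transformed $K$-component mixture, match components, reduce to $\bm M\bm\Sigma_k\bm M^{\top}=\bm\Sigma_k$ with $\bm M=\bm A^{-1}\bm A'$ (the paper uses this convention), and try to force $\bm M=\bm I$. But the two caveats you raise are not excess caution --- they are precisely the places where the paper's own proof is deficient. First, the paper silently assumes the components match in their original order; it never invokes mixture identifiability nor addresses a nontrivial permutation $\sigma$, which you correctly note must be ruled out (and cannot be, in general: your $\tfrac12\mathcal N(\bm 0,\mathrm{diag}(1,2))+\tfrac12\mathcal N(\bm 0,\mathrm{diag}(2,1))$ example realizes a transposition by an affine symmetry). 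Second, the paper's concluding claim --- that two distinct covariances force the common congruence stabilizer to be $\{\bm I\}$ --- is simply false: the stabilizer always contains $-\bm I$, so if all $\bm\mu_k$ coincide (say all zero, which the hypotheses permit), then $\bm y=(\bm A,\bm 0)$ and $\bm y'=(-\bm A,\bm 0)$ are distinct elements of $\mathcal Y$ with $p(\bm x\mid\bm y)=p(\bm x\mid\bm y')$. Even in one dimension, $z\sim\tfrac12\mathcal N(0,1)+\tfrac12\mathcal N(0,4)$ with $x=\pm z$ defeats the lemma as stated. So your assessment is right: the statement needs a nondegeneracy assumption on $\{(\pi_k,\bm\mu_k,\bm\Sigma_k)\}$ (equivalently, that $p(\bm z)$ admits no nontrivial affine self-symmetry), or the conclusion must be read modulo such symmetries; with that amendment, your outline (mixture identifiability \`a la Yakowitz--Spragins, then analysis of the stabilizer intersection combined with the mean equations) is the correct way to complete the argument, and is more careful than what the paper provides.
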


\begin{theorem}
If the distribution of $\bm z$ is a mixture of Gaussians which has more than two different components, and $\bm x_1, \bm x_2$ are two affine transformations of $\bm z$, then the transfer between them can be recovered given their respective marginals.
\end{theorem}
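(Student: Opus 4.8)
The plan is to obtain this theorem as a corollary of Lemma~\ref{lemma:gaussian} and Proposition~\ref{prop:recover}, supplying the one ingredient neither of them states outright: that, under the shared-latent affine-plus-Gaussian-mixture model, the distinctness condition of Proposition~\ref{prop:recover} is not merely necessary but actually \emph{sufficient} for reconstructing the joint distribution, hence for recovering the two transfer conditionals. First I would fix notation consistent with the formulation: $\bm x_1$ is generated from style $\bm y_1=(\bm A_1,\bm b_1)\in\mathcal Y$ and $\bm x_2$ from $\bm y_2=(\bm A_2,\bm b_2)\in\mathcal Y$ acting on the \emph{same} latent $\bm z\sim\sum_{k=1}^K\pi_k\mathcal N(\bm\mu_k,\bm\Sigma_k)$, with $p(\bm x_i\mid\bm y_i,\bm z)=\mathcal N(\bm x_i;\bm A_i\bm z+\bm b_i,\epsilon^2\bm I)$; the object to be recovered is
\[
p(\bm x_1,\bm x_2\mid\bm y_1,\bm y_2)=\int_{\bm z}p(\bm z)\,p(\bm x_1\mid\bm y_1,\bm z)\,p(\bm x_2\mid\bm y_2,\bm z)\,d\bm z .
\]

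Next I would carry out the recovery in two steps. Step one, identify the styles from the marginals: each observed marginal is $\bm x_i\sim\sum_k\pi_k\mathcal N\!\big(\bm A_i\bm\mu_k+\bm b_i,\ \bm A_i\bm\Sigma_k\bm A_i^{\top}+\epsilon^2\bm I\big)$, a finite mixture of Gaussians, so by the classical identifiability of finite Gaussian mixtures its component weights, means, and covariances are determined by $p(\bm x_i\mid\bm y_i)$. Lemma~\ref{lemma:gaussian} tells us the map $\bm y\mapsto p(\bm x\mid\bm y)$ is injective on $\mathcal Y$, so these component parameters pin down $\bm y_i=(\bm A_i,\bm b_i)$ uniquely --- this is precisely where the hypothesis of two distinct $\bm\Sigma_i\ne\bm\Sigma_j$ among the $K\ge 2$ components is needed, since it breaks the rotational ambiguity that defeats the single-Gaussian case of Example~1. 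Step two, having recovered $\bm y_1,\bm y_2$, and reading $p(\bm z)$ back off either marginal (invert $\bm\mu_k=\bm A_i^{-1}(\tilde{\bm\mu}_k-\bm b_i)$, $\bm\Sigma_k=\bm A_i^{-1}(\tilde{\bm\Sigma}_k-\epsilon^2\bm I)\bm A_i^{-\top}$, legitimate because $\bm A_i$ is invertible), the integral above is fully specified; hence the joint, and with it $p(\bm x_2\mid\bm x_1;\bm y_1,\bm y_2)$ and $p(\bm x_1\mid\bm x_2;\bm y_1,\bm y_2)$, is determined. In the noiseless limit $\epsilon\to 0$ this collapses to the deterministic affine transfer $\bm x_2=\bm A_2\bm A_1^{-1}(\bm x_1-\bm b_1)+\bm b_2$.

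The main obstacle I anticipate is not any single heavy computation but making the word ``recovered'' precise: Proposition~\ref{prop:recover} only supplies a necessary condition, so the genuine content beyond Lemma~\ref{lemma:gaussian} is the sufficiency direction --- showing that injectivity of $\bm y\mapsto p(\bm x\mid\bm y)$ together with the shared-latent structure actually lets one reconstruct the joint. That argument rests on two points that must be invoked carefully, namely (i) identifiability of finite Gaussian mixtures, to expose the per-component parameters from the marginal, and (ii) recoverability of $p(\bm z)$ itself (and, if one does not assume it known, of the noise level $\epsilon$), so that the mixing integral can be evaluated. Once these are in place, everything else is bookkeeping on top of Lemma~\ref{lemma:gaussian}.
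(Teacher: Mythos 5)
Your argument is correct and matches the paper's intent: the paper supplies no separate proof of this theorem, presenting it as an immediate corollary of Lemma~\ref{lemma:gaussian} (injectivity of $\bm y\mapsto p(\bm x|\bm y)$ identifies each style from its marginal, after which the shared-latent integral determines the joint and hence both transfer conditionals). Your two-step write-up simply makes explicit the sufficiency direction that the paper leaves implicit, so there is nothing to correct.
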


\subsection{Example 2: Word substitution}
Consider here another example when $\bm z$ is a bi-gram language model and a style $\bm y$ is a vocabulary in use that maps each ``content word'' onto its surface form (lexical form). If we observe two realizations $\bm x_1$ and $\bm x_2$ of the same language $\bm z$, the transfer and recovery problem becomes inferring a word alignment between $\bm x_1$ and $\bm x_2$. 

Note that this is a simplified version of language decipherment or translation. Nevertheless, the recovery problem is still sufficiently hard. To see this, let $\bm M_1, \bm M_2 \in \mathcal{R}^{n\times n}$ be the estimated bi-gram probability matrix of data $\bm X_1$ and $\bm X_2$ respectively. Seeking the word alignment is equivalent to finding a permutation matrix $\bm P$ such that $\bm P^\top \bm M_1 \bm P \approx \bm M_2$, which can be expressed as an optimization problem,
\begin{align*}
\min_{\bm P} ~\| \bm P^\top \bm M_1 \bm P - \bm M_2 \|^2
\end{align*}
The same formulation applies to graph isomorphism (GI) problems given $\bm M_1$ and $\bm M_2$ as the adjacency matrices of two graphs, suggesting that determining the existence and uniqueness of $\bm P$ is at least GI hard. Fortunately, if $\bm M$ as a graph is complex enough, the search problem could be more tractable. For instance, if each vertex's weights of incident edges as a set is unique, then finding the isomorphism can be done by simply matching the sets of edges. This assumption largely applies to our scenario where $\bm z$ is a complex language model. We empirically demonstrate this in the results section.

The above examples suggest that $\bm z$ as the latent content variable should carry most complexity of data $\bm x$, while $\bm y$ as the latent style variable should have relatively simple effects.
We construct the model accordingly in the next section.

\section{Method}
Learning the style transfer function under our generative assumption is essentially learning the conditional distribution $p(\bm x_1|\bm x_2;\bm y_1,\bm y_2)$ and $p(\bm x_2|\bm x_1; \bm y_1,\bm y_2)$. Unlike in vision where images are continuous and hence the transfer functions can be learned and optimized directly, the discreteness of language requires us to operate through the latent space. Since $\bm x_1$ and $\bm x_2$ are conditionally independent given the latent content variable $\bm z$, 
\begin{align}
\begin{split}
p(\bm x_1|\bm x_2;\bm y_1,\bm y_2) &= \int_{\bm z} p(\bm x_1,\bm z|\bm x_2;\bm y_1,\bm y_2)d\bm z\\
&=\int_{\bm z} p(\bm z|\bm x_2,\bm y_2)\cdot p(\bm x_1|\bm y_1,\bm z)d\bm z\\
&= \mathbb E_{\bm z\sim p(\bm z|\bm x_2,\bm y_2)}[p(\bm x_1|\bm y_1,\bm z)]
\end{split}
\end{align}

This suggests us learning an auto-encoder model.
Specifically, a style transfer from $\bm x_2$ to $\bm x_1$ involves two steps---an encoding step that infers $\bm x_2$'s content $\bm z\sim p(\bm z|\bm x_2,\bm y_2)$, and a decoding step which generates the transferred counterpart from $p(\bm x_1|\bm y_1,\bm z)$. 
In this work, we approximate and train $p(\bm z|\bm x, \bm y)$ and $p(\bm x|\bm y, \bm z)$ using neural networks (where $\bm y\in\{\bm y_1, \bm y_2\}$).

Let $E:\mathcal X\times \mathcal Y\rightarrow \mathcal Z$ be an encoder that infers the content $\bm z$ for a given sentence $\bm x$ and a style $\bm y$
, and $G:\mathcal Y\times \mathcal Z\rightarrow \mathcal X$ be a generator that generates a sentence $\bm x$ from a given style $\bm y$ and content $\bm z$. $E$ and $G$ form an auto-encoder when applying to the same style, and thus we have reconstruction loss,
\begin{align}\label{eq:rec}
\begin{split}
\mathcal L_{\text{rec}}(\bm\theta_E,\bm\theta_G)=~&\mathbb E_{\bm x_1\sim\bm X_1}[-\log p_G(\bm x_1|\bm y_1, E(\bm x_1,\bm y_1))]~+\\
&\mathbb E_{\bm x_2\sim\bm X_2}[-\log p_G(\bm x_2|\bm y_2, E(\bm x_2,\bm y_2))]
\end{split}
\end{align}
where $\bm\theta$ are the parameters to estimate.

In order to make a meaningful transfer by flipping the style, $\bm X_1$ and $\bm X_2$'s content space must coincide, as our generative framework presumed. To constrain that $\bm x_1$ and $\bm x_2$ are generated from the same latent content distribution $p(\bm z)$, one option is to apply a variational auto-encoder~\citep{kingma2013auto}. A VAE imposes a prior density $p(\bm z)$, such as $\bm z\sim\mathcal N(\bm 0,\bm I)$, and uses a KL-divergence regularizer to align both posteriors $p_E(\bm z|\bm x_1,\bm y_1)$ and $p_E(\bm z|\bm x_2,\bm y_2)$ to it,
\begin{align}\label{eq:kl}
\begin{split}
\mathcal L_{\text{KL}}(\bm\theta_E)=~&\mathbb E_{\bm x_1\sim\bm X_1}[D_{\text{KL}}(p_E(\bm z|\bm x_1,\bm y_1)\|p(\bm z))]+
\mathbb E_{\bm x_2\sim\bm X_2}[D_{\text{KL}}(p_E(\bm z|\bm x_2,\bm y_2)\|p(\bm z))]
\end{split}
\end{align}
The overall objective is to minimize $\mathcal L_{\text{rec}}+\mathcal L_{\text{KL}}$, whose opposite is the variational lower bound of data likelihood.

However, as we have argued in the previous section, restricting $\bm z$ to a simple and even distribution and pushing most complexity to the decoder may not be a good strategy for non-parallel style transfer. In contrast, a standard auto-encoder simply minimizes the reconstruction error, encouraging $\bm z$ to carry as much information about $\bm x$ as possible. On the other hand, it lowers the entropy in $p(\bm x|\bm y,\bm z)$, which helps to produce meaningful style transfer in practice as we flip between $\bm y_1$ and $\bm y_2$. Without explicitly modeling $p(\bm z)$, 
it is still possible to force  distributional alignment of $p(\bm z|\bm y_1)$ and $p(\bm z|\bm y_2)$.
To this end, we introduce two constrained variants of auto-encoder.

\subsection{Aligned auto-encoder}


Dispense with VAEs that make an explicit assumption about $p(\bm z)$ and align both posteriors to it, we align $p_E(\bm z|\bm y_1)$ and $p_E(\bm z|\bm y_2)$ with each other, which leads to the following constrained optimization problem:
\begin{align}
\begin{split}
\bm\theta^*= \argmin_{\bm\theta}&~ \mathcal L_{\text{rec}}(\bm\theta_E,\bm\theta_G)\\
\text{s.t.}\quad E(\bm x_1,\bm y_1) \stackrel{\text{d}}{=} E(\bm x_2,\bm y_2) &\qquad \bm x_1\sim\bm X_1,\bm x_2\sim\bm X_2
\end{split}
\end{align}
In practice, a Lagrangian relaxation of the primal problem is instead optimized.
We introduce an adversarial discriminator $D$ to align the aggregated posterior distribution of $\bm z$ from different styles~\citep{makhzani2015adversarial}.
$D$ aims to distinguish between these two distributions:
\begin{align}
\begin{split}
\mathcal L_{\text{adv}}(\bm\theta_E,\bm\theta_D)=~\mathbb E_{\bm x_1\sim\bm X_1}[-\log D(E(\bm x_1,\bm y_1))]~+
\mathbb E_{\bm x_2\sim\bm X_2}[-\log(1-D(E(\bm x_2,\bm y_2)))]
\end{split}
\end{align}

The overall training objective is a min-max game played among the encoder $E$, generator $G$ and discriminator $D$. They constitute an aligned auto-encoder:
\begin{align}
\min_{E,G}\max_D \mathcal L_{\text{rec}}-\lambda\mathcal L_{\text{adv}}
\end{align}


We implement the encoder $E$ and generator $G$ using single-layer RNNs with GRU cell. $E$ takes an input sentence $\bm x$ with initial hidden state $\bm y$, and outputs the last hidden state $\bm z$ as its content representation. $G$ generates a sentence $\bm x$ conditioned on latent state $(\bm y,\bm z)$.
To align the distributions of $\bm z_1=E(\bm x_1,\bm y_1)$ and $\bm z_2=E(\bm x_2,\bm y_2)$, the discriminator $D$ is a feed-forward network with a single hidden layer and a sigmoid output layer.

\subsection{Cross-aligned auto-encoder}
The second variant, cross-aligned auto-encoder, directly aligns the transfered samples from one style with the true samples from the other. 
Under the generative assumption, $p(\bm x_2| \bm y_2) = \int_{\bm x_1} p(\bm x_2|\bm x_1;\bm y_1,\bm y_2) p(\bm x_1|\bm y_1)d\bm x_1$, thus $\bm x_2$ (sampled from the left-hand side) should exhibit the same distribution as transferred $\bm x_1$ (sampled from the right-hand side), and vice versa. Similar to our first model, the second model uses two discriminators $D_1$ and $D_2$ to align the populations. $D_1$'s job is to distinguish between real $\bm x_1$ and transferred $\bm x_2$, and $D_2$'s job is to distinguish between real $\bm x_2$ and transferred $\bm x_1$.

Adversarial training over the discrete samples generated by $G$ hinders gradients propagation.
Although sampling-based gradient estimator such as REINFORCE~\citep{williams1992simple} can by adopted, training with these methods can be unstable due to the high variance of the sampled gradient.
Instead, we employ two recent techniques to approximate the discrete training~\citep{hu2017controllable,lamb2016professor}.
First, instead of feeding a single sampled word as the input to the generator RNN, we use the softmax distribution over words instead.
Specifically, during the generating process of transferred $\bm x_2$ from $G(\bm y_1,\bm z_2)$, suppose at time step $t$ the output logit vector is $\bm v_t$. We feed its peaked distribution $\text{softmax}(\bm v_t/\gamma)$ as the next input, where $\gamma \in (0,1)$ is a temperature parameter.

Secondly, we use Professor-Forcing \citep{lamb2016professor} to match the sequence of hidden states instead of the output words, which contains the information about outputs and is smoothly distributed. That is, the input to the discriminator $D_1$ is the sequence of hidden states of either (1) $G(\bm y_1,\bm z_1)$ teacher-forced by a real example $\bm x_1$, or (2) $G(\bm y_1,\bm z_2)$ self-fed by previous soft distributions. 

The running procedure of our cross-aligned auto-encoder is illustrated in Figure~\ref{fig:model}. Note that cross-aligning strengthens the alignment of latent variable $\bm z$ over the recurrent network of generator $G$. By aligning the whole sequence of hidden states, it prevents $\bm z_1$ and $\bm z_2$'s initial misalignment from propagating through the recurrent generating process, as a result of which the transferred sentence may end up somewhere far from the target domain.

We implement both $D_1$ and $D_2$ using convolutional neural networks for sequence classification~\citep{kim2014convolutional}. The training algorithm is presented in Algorithm~\ref{alg:train}.

\begin{figure}
\centering
\includegraphics[width=0.7\textwidth]{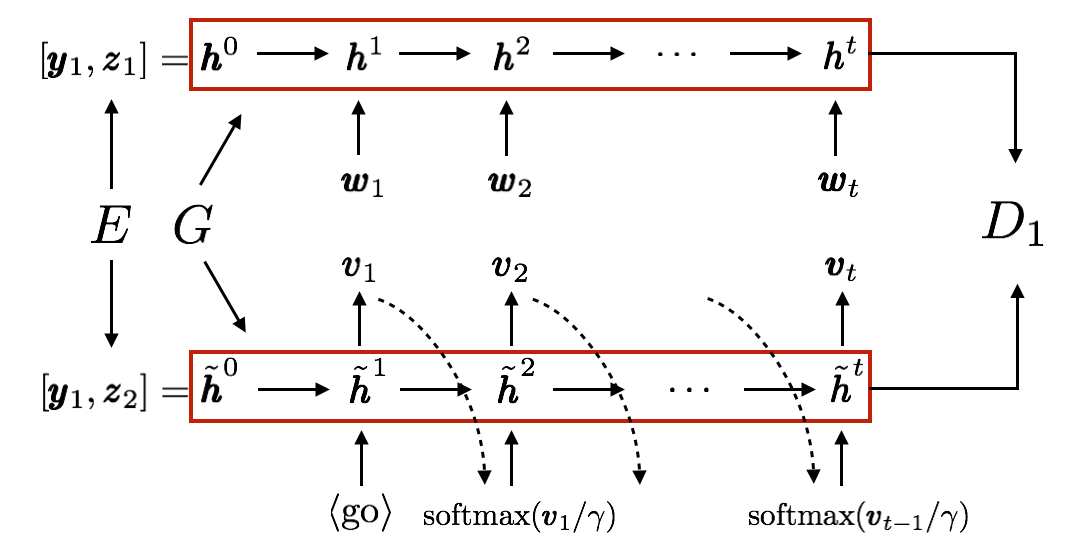}
\caption{Cross-aligning between $\bm x_1$ and transferred $\bm x_2$. For $\bm x_1$, $G$ is teacher-forced by its words $\bm w_1\bm w_2\cdots\bm w_t$. For transfered $\bm x_2$, $G$ is self-fed by previous output logits. The sequence of hidden states $\bm h^0,\cdots,\bm h^t$ and $\tilde{\bm h}^0,\cdots,\tilde{\bm h}^t$ are passed to discriminator $D_1$ to be aligned.
Note that our first variant aligned auto-encoder is a special case of this, where only $\bm h^0$ and $\tilde{\bm h}^0$, i.e. $\bm z_1$ and $\bm z_2$, are aligned.
}
\label{fig:model}
\end{figure}

\begin{algorithm}[!t!]
\caption{Cross-aligned auto-encoder training. The hyper-parameters are set as $\lambda=1,\gamma=0.001$ and learning rate is $0.0001$ for all experiments in this paper.}
\label{alg:train}
\begin{algorithmic}
\REQUIRE Two corpora of different styles $\bm X_1,\bm X_2$. Lagrange multiplier $\lambda$, temperature $\gamma$.
\STATE Initialize $\bm\theta_E,\bm\theta_G,\bm\theta_{D_1},\bm\theta_{D_2}$
\REPEAT
\FOR{$p=1,2$; $q=2,1$}
  \STATE Sample a mini-batch of $k$ examples $\{\bm x_p^{(i)}\}_{i=1}^k$ from $\bm X_p$
  \STATE Get the latent content representations $\bm z_p^{(i)}=E(\bm x_p^{(i)},\bm y_p)$
  \STATE Unroll $G$ from initial state $(\bm y_p,\bm z_p^{(i)})$ by feeding $\bm x_p^{(i)}$, and get the hidden states sequence $\bm h_p^{(i)}$
  \STATE Unroll $G$ from initial state $(\bm y_q,\bm z_p^{(i)})$ by feeding previous soft output distribution with temperature $\gamma$, and get the transferred hidden states sequence $\tilde{\bm h}_p^{(i)}$
\ENDFOR
\STATE Compute the reconstruction $\mathcal L_{\text{rec}}$ by Eq. (\ref{eq:rec})
\STATE Compute $D_1$'s (and symmetrically $D_2$'s) loss:
\begin{align}
\mathcal L_{\text{adv}_1}=-\frac{1}{k}\sum_{i=1}^k \log D_1(\bm h_1^{(i)})-\frac{1}{k}\sum_{i=1}^k \log (1-D_1(\tilde{\bm h}_2^{(i)}))
\end{align}
\STATE Update $\{\bm\theta_E,\bm\theta_G\}$ by gradient descent on loss
\begin{align}
\mathcal L_{\text{rec}}-\lambda(\mathcal L_{\text{adv}_1}+\mathcal L_{\text{adv}_2})
\end{align}
\STATE Update $\bm\theta_{D_1}$ and $\bm\theta_{D_2}$ by gradient descent on loss $\mathcal L_{\text{adv}_1}$ and $\mathcal L_{\text{adv}_2}$ respectively
\UNTIL{convergence}
\ENSURE Style transfer functions $G(\bm y_2, E(\cdot,\bm y_1)):\mathcal X_1\rightarrow \mathcal X_2$ and $G(\bm y_1, E(\cdot,\bm y_2)):\mathcal X_2\rightarrow \mathcal X_1$
\end{algorithmic}
\end{algorithm}

\section{Experimental setup}

\paragraph{Sentiment modification} Our first experiment focuses on text rewriting with the goal of changing the underlying sentiment, which can be regarded as ``style transfer'' between negative and positive sentences. We run experiments on Yelp restaurant reviews, utilizing readily available user ratings associated with each review. Following standard practice, reviews with rating above three are considered positive, and those below three are considered negative.
While our model operates at the sentence level, the sentiment annotations in our dataset are provided at the document level. We assume that all the sentences in a document have the same sentiment. This is clearly an oversimplification, since some sentences (e.g., background) are sentiment neutral. Given that such sentences are more common in long reviews, we filter out reviews that exceed 10 sentences. We further filter the remaining sentences by eliminating those that exceed 15 words. The resulting dataset has 250K negative sentences, and 350K positive ones. The vocabulary size is 10K after replacing words occurring less than 5 times with the ``<unk>'' token. As a baseline model, we compare against 
the control-gen model of \citet{hu2017controllable}. 

To quantitatively evaluate the transfered sentences, we adopt a model-based evaluation metric similar to the one used for image transfer~\citep{isola2016image}. Specifically, we measure how often a transferred sentence has the correct sentiment according to a pre-trained sentiment classifier. For this purpose, we use the TextCNN model as described in~\citet{kim2014convolutional}. On our simplified dataset for style transfer, it achieves nearly perfect accuracy of 97.4\%. 

While the quantitative evaluation provides some indication of transfer quality, it does not capture all the aspects  
of this generation task. Therefore, we also perform two human evaluations on 500 sentences randomly selected from the test set\footnote{we eliminated 37 sentences from them that were judged as neutral by human judges.}. In the first evaluation, the judges were asked to rank generated sentences in terms of their fluency and sentiment. Fluency was rated from 1 (unreadable) to 4 (perfect), while sentiment categories were ``positive'', ``negative'', or ``neither'' (which could be contradictory, neutral or nonsensical). In the second evaluation, we evaluate the transfer process comparatively. The annotator was shown a source sentence and the corresponding outputs of the systems in a random order, and was asked ``Which transferred sentence is semantically equivalent to the source sentence with an opposite sentiment?''. They can be both satisfactory, A/B is better, or both unsatisfactory. We collect two labels for each question. The label agreement and conflict resolution strategy can be found in the supplementary material. Note that the two evaluations are not redundant. For instance, a system that always generates the same grammatically correct sentence with the right sentiment independently of the source sentence will score high in the first evaluation setup, but low in the second one.


\paragraph{Word substitution decipherment} Our second set of experiments involves decipherment of word substitution ciphers, which has been previously explored in NLP literature \citep{dou2012large, nuhn2013decipherment}. These ciphers replace every word in plaintext (natural language) with a cipher token according to a 1-to-1 substitution key. The decipherment task is to recover the plaintext from ciphertext. It is trivial if we have access to parallel data. However we are interested to consider a non-parallel decipherment scenario. For training, we select 200K sentences as $\bm X_1$, and apply a substitution cipher $f$ on a different set of 200K sentences to get $\bm X_2$. While these sentences are non-parallel, they are drawn from the same distribution from the review dataset. The development and test sets have 100K parallel sentences $\bm D_1=\{\bm x^{(1)},\cdots,\bm x^{(n)}\}$ and $\bm D_2=\{f(\bm x^{(1)}),\cdots,f(\bm x^{(n)})\}$. We can quantitatively compare between $\bm D_1$ and transferred (deciphered) $\bm D_2$ using Bleu score~\citep{papineni2002bleu}.

Clearly, the difficulty of this decipherment task depends on the number of substituted words. Therefore, we report model performance with respect to the percentage of the substituted vocabulary. Note that the transfer models do not know that $f$ is a word substitution function. They learn it entirely from the data distribution.

In addition to having different transfer models, we introduce a simple decipherment baseline based on word frequency. Specifically, we assume that words shared between $\bm X_1$ and $\bm X_2$  do not require translation. The rest of the words are mapped based on their frequency, and ties are broken arbitrarily. Finally, to assess the difficulty of the task, we report the 
accuracy of a machine translation system trained on a parallel corpus~\citep{klein2017opennmt}.

\paragraph{Word order recovery}
Our final experiments focus on the word ordering task, also known as bag translation~\citep{brown1990statistical,Schmaltz2016WordOW}. By learning the style transfer functions between original English sentences $\bm X_1$ and shuffled English sentences $\bm X_2$, the model can be used to recover the original word order of a shuffled sentence (or conversely to randomly permute a sentence). The process to construct non-parallel training data and parallel testing data is the same as in the word substitution decipherment experiment. Again the transfer models do not know that $f$ is a shuffle function and learn it completely from data.


\section{Results}

\paragraph{Sentiment modification}
Table~\ref{tab:sentiment} and Table~\ref{tab:human} show the performance of various models for both human and automatic evaluation. The control-gen model of \citet{hu2017controllable} performs better in terms of sentiment accuracy in both evaluations. This is not surprising as their generation is directly guided by a sentiment classifier. Their system also achieves higher fluency score. However, these gains do not translate into improvements in terms of the overall transfer, where our model faired better. As can be seen from the examples listed in Table~\ref{tab:examples}, our model is more consistent with  the grammatical structure and semantic meaning of the source sentence. In contrast, their model achieves sentiment change by generating an entirely new sentence which has little overlap with the source. The discrepancy between the two experiments demonstrates the crucial importance of developing appropriate evaluation measures to compare models for style transfer. 

\begin{table}
\begin{center}
\begin{tabular}{l|c}
\toprule
Method & accuracy\\
\toprule
\citet{hu2017controllable} & 83.5\\
\hline
Variational auto-encoder & 23.2\\
Aligned auto-encoder & 48.3\\
Cross-aligned auto-encoder & 78.4\\
\toprule
\end{tabular}
\end{center}
\caption{Sentiment accuracy of transferred sentences, as measured by a pretrained classifier.}\label{tab:sentiment}
\end{table}

\begin{table}
\begin{center}
\begin{tabular}{l|cc|c}
\toprule
Method & sentiment & fluency & overall transfer\\
\toprule
\citet{hu2017controllable} & 70.8 & 3.2 & 41.0\\
Cross-align & 62.6 & 2.8 & 41.5\\
\toprule
\end{tabular}
\end{center}
\caption{Human evaluations on sentiment, fluency and overall transfer quality. Fluency rating is from 1 (unreadable) to 4 (perfect). Overall transfer quality is evaluated in a comparative manner, where the judge is shown a source sentence and two transferred sentences, and decides whether they are both good, both bad, or one is better.}\label{tab:human}
\end{table}

\begin{table}
\begin{center}
\begin{tabular}{@{~~~~~}l@{~~~~~}}
\toprule
\multicolumn{1}{c}{From negative to positive}\\
\toprule
consistently slow .\\
consistently good .\\
consistently fast .\\[7pt]



my goodness it was so gross .\\
my husband 's steak was phenomenal .\\
my goodness was so awesome .\\[7pt]

it was super dry and had a weird taste to the entire slice .\\
it was a great meal and the tacos were very kind of good .\\
it was super flavorful and had a nice texture of the whole side .\\[7pt]

\toprule
\multicolumn{1}{c}{From positive to negative}\\
\toprule


i love the ladies here ! \\
i avoid all the time ! \\
i hate the doctor here ! \\[7pt]


my appetizer was also very good and unique .\\
my bf was n't too pleased with the beans .\\
my appetizer was also very cold and not fresh whatsoever .\\[7pt]


came here with my wife and her grandmother !\\
came here with my wife and hated her !\\
came here with my wife and her son .\\






\toprule
\end{tabular}
\end{center}
\caption{Sentiment transfer samples. The first line is an input sentence, the second and third lines are the generated sentences after sentiment transfer by \citet{hu2017controllable} and our cross-aligned auto-encoder, respectively.}\label{tab:examples}
\end{table}

\paragraph{Word substitution decipherment} Table~\ref{tab:synthetic} summarizes the performance of our model and the baselines on the decipherment task, at various levels of word substitution.  Consistent with our intuition, the last row in this table shows that the task is trivial when the parallel data is provided. In non-parallel case, the difficulty of the task is driven by the substitution rate. Across all the testing conditions, our cross-aligned model consistently outperforms its counterparts. The difference becomes more pronounced as the task becomes harder. When the substitution rate is 20\%, all methods do a reasonably good job in recovering substitutions. However,
when 100\% of the words are substituted (as expected in real
language decipherment), the poor performance of variational autoencoder and aligned auto-encoder rules out their application for this task.


\begin{table}
\begin{center}
\begin{tabular}{l|ccccc|c}
\toprule
\multirow{2}{*}{Method} & \multicolumn{5}{c|}{Substitution decipher} & \multirow{2}{*}{Order recover}\\
 & 20\% & 40\% & 60\% & 80\% & 100\%\\
\toprule
No transfer (copy) & 56.4 & 21.4 & 6.3 & 4.5 & 0 & 5.1\\
Unigram matching& 74.3 & 48.1 & 17.8 & 10.7 & 1.2 & -\\
Variational auto-encoder & 79.8 & 59.6 & 44.6 & 34.4 & 0.9 & 5.3\\
Aligned auto-encoder & 81.0 & 68.9 & 50.7 & 45.6 & 7.2 & 5.2\\
Cross-aligned auto-encoder & \bf 83.8 & \bf 79.1 &\bf 74.7 &\bf 66.1 &\bf 57.4 & \bf 26.1\\
\hline
\hline
Parallel translation & 99.0 & 98.9 & 98.2 & 98.5 & 97.2 & 64.6\\
\toprule
\end{tabular}
\end{center}
\caption{Bleu scores of word substitution decipherment and word order recovery.}\label{tab:synthetic}
\end{table}

\paragraph{Word order recovery}

The last column in Table \ref{tab:synthetic} demonstrates the performance on the word order recovery task. Order recovery is much harder---even when trained with parallel data, the machine translation model achieves only 64.6 Bleu score. Note that some generated orderings may be completely valid (e.g., reordering conjunctions), but the models will be penalized for producing them. In this task, only the cross-aligned auto-encoder achieves grammatical reorder to a certain extent, demonstrated by its Bleu score 26.1. Other models fail this task, doing no better than no transfer. 

\section{Conclusion}
Transferring languages from one style to another has been previously trained using parallel data.
In this work, we formulate the task as \emph{a decipherment problem} with access only to non-parallel data.
The two data collections are assumed to be generated by a latent variable generative model.
Through this view, our method optimizes neural networks by forcing distributional alignment (invariance) over the latent space or sentence populations.
We demonstrate the effectiveness of our method on tasks that permit quantitative evaluation, such as sentiment transfer, word substitution decipherment and word ordering.
The decipherment view also provides an interesting open question---\emph{when can the joint distribution $p(\bm x_1, \bm x_2)$ be recovered given only marginal distributions?} 
We believe addressing this general question would promote the style transfer research in both vision and NLP.


\section*{Acknowledgments}
We thank Nicholas Matthews for helping to facilitate human evaluations, and Zhiting Hu for sharing his code. We also thank Jonas Mueller, Arjun Majumdar, Olga Simek, Danelle Shah, MIT NLP group and the reviewers for their helpful comments. This work was supported by MIT Lincoln Laboratory. 

\bibliography{ref}
\bibliographystyle{plainnat}

\newpage
\appendix
\section{Proof of Lemma \ref{lemma:gaussian}}
\gaussian*

\begin{proof}
$$p(\bm x|\bm y=(\bm A,\bm b))=\sum_{k=1}^K \pi_k\mathcal N(\bm x;\bm A\bm\mu_k+\bm b, \bm A\bm \Sigma_k\bm A^\top+\epsilon^2\bm I)$$

For different $\bm y=(\bm A,\bm b)$ and $\bm y'=(\bm A',\bm b')$, $p(\bm x|\bm y)=p(\bm x|\bm y')$ entails that for $k=1,\cdots,K$,
\begin{align*}
\begin{cases}
\bm A\bm\mu_k+\bm b = \bm A'\bm\mu_k+\bm b'\\
\bm A\bm \Sigma_k\bm A^\top = \bm A'\bm \Sigma_k\bm A'^\top
\end{cases}
\end{align*}
Since all $\mathcal Y$ are invertible,
$$(\bm A^{-1}\bm A')\bm\Sigma_k(\bm A^{-1}\bm A')^\top=\bm\Sigma_k$$ 
Suppose $\bm\Sigma_k=\bm Q_k\bm D_k\bm Q_k^\top$ is $\bm\Sigma_k$'s orthogonal diagonalization. If $k=1$, all solutions for $\bm A^{-1}\bm A'$ have the form:
$$\left\{\bm Q\bm D^{1/2}\bm U\bm D^{-1/2}\bm Q^\top \middle|\bm U\text{ is orthogonal}\right\}$$
However, when $K\ge 2$ and there are two different $\bm\Sigma_i\ne\bm\Sigma_j$, the only solution is $\bm A^{-1}\bm A'=\bm I$, i.e. $\bm A=\bm A'$, and thus $\bm b=\bm b'$.

Therefore, for all $\bm y\ne \bm y'$, $p(\bm x|\bm y)\ne p(\bm x|\bm y')$.
\end{proof}
\end{document}